\documentclass[11pt]{article}

\usepackage[preprint]{acl}

\usepackage{times}
\usepackage{latexsym}
\usepackage[T1]{fontenc}
\usepackage[utf8]{inputenc}
\usepackage{microtype}
\usepackage{inconsolata}
\usepackage{graphicx}
\usepackage{amsmath}
\usepackage{amssymb}
\usepackage{amsthm}
\usepackage{booktabs}
\usepackage{multirow}
\usepackage{makecell}
\usepackage{xcolor}
\usepackage{algorithm}
\usepackage{algorithmic}
\usepackage{pifont}
\usepackage{tikz}
\usepackage{stfloats}
\usetikzlibrary{decorations.pathreplacing,positioning,fit}

\newtheorem{theorem}{Theorem}
\newtheorem{proposition}{Proposition}

\title{Multi-Stakeholder LLM Alignment: \\
Decomposing Estimation from Aggregation}

\author{
  \textbf{Lulu Zheng\textsuperscript{1}},
  \textbf{Wenjin Yang\textsuperscript{1}},
  \textbf{Xiangwen Zhang\textsuperscript{1}},
  \textbf{Rong Yin\textsuperscript{2}},
\\
  \textbf{Yulan Hu\textsuperscript{1}}\thanks{~Corresponding author.},
  \textbf{Zheng Pan\textsuperscript{1}},
  \textbf{Xin Li\textsuperscript{1}}
\\
\\
  \textsuperscript{1}AMAP, Alibaba Group \quad
  \textsuperscript{2}Beihang University
\\
  {\small         
  \texttt{\{zll522441,yangwenjin.ywj,xwzhang,huyulan,panzheng.pan,beilai.bl\}@alibaba-inc.com, yinrong@buaa.edu.cn}
  } \\  
}

\begin{document}
\maketitle

\begin{abstract}
Multi-stakeholder tasks require one output to satisfy users with conflicting preferences. Holistic LLM judges conflate utility estimation and utility aggregation, yielding unstable implicit weights. We show empirically and theoretically that this aggregation-specific \emph{weighting noise} can create large score shifts when stakeholder satisfaction is dispersed; in our experiments, these weight-induced shifts also increase with stakeholder count. We propose \textsc{DecompR}: counterfactual-calibrated weights are fixed from query structure before candidate scoring, while per-role utilities are estimated independently, removing candidate-dependent weight drift and reducing estimation noise.\end{abstract}

\section{Introduction}
\label{sec:intro}

Large language models (LLMs) have achieved remarkable success in code generation, mathematical reasoning, and open-ended dialogue \citep{openai2023gpt4, qwen2025qwen3, guo2025deepseek}. Much of this progress, however, targets scenarios where the agent serves a single user whose preferences define a clear optimization objective. Many real tasks---group travel planning, multi-party negotiation, committee resource allocation---instead require one agent to produce a single joint response for multiple stakeholders whose preferences may conflict.

Aligning LLMs for such tasks often relies on reinforcement learning \citep{ouyang2022training, shao2024deepseekmath, guo2025deepseek}, which requires a reward signal specifying what counts as a good joint response. Since these tasks are open-ended and lack verifiable ground truth, the reward is typically supplied by LLM-as-Judge \citep{zheng2023judging} or learned reward models \citep{ouyang2022training}. LLM judges are known to exhibit evaluation biases and instability even outside multi-stakeholder settings, including position bias in pairwise comparisons \citep{zheng2023judging, wang2023large}, adversarial vulnerabilities in absolute scoring \citep{raina2024llm}, and intra-rater score inconsistency across repeated runs \citep{haldar2025rating}. Multi-stakeholder evaluation compounds this problem: the judge must perform both \emph{estimation} (how well is each stakeholder served?) and \emph{aggregation} (how should competing utilities be weighted?) in a single pass, thereby producing implicitly assigned aggregation weights that can shift across calls.

As Figure~\ref{fig:problem} illustrates, repeated evaluations of the same plan can instantiate different implicit weights and produce inconsistent rewards. We refer to this as \emph{weighting noise}: an aggregation-specific error in which the judge's implicit stakeholder weights shift across evaluations, reweighting utilities. Its score impact is amplified when stakeholder satisfactions are dispersed and may increase with stakeholder count (\S\ref{sec:empirical}). Our theory shows how such shifts can corrupt within-group response ranking and mislead GRPO-style policy gradients (\S\ref{sec:theory}). We address this with \textsc{DecompR}, whose key mechanism is \emph{counterfactual-calibrated aggregation}: each stakeholder's weight is fixed from query structure as a proxy for expected sacrifice, while per-role grounded evaluation estimates utilities independently.

In summary, we formalize the multi-stakeholder alignment problem, show empirically and theoretically that LLM-judge reward inconsistency in this setting arises in part from aggregation-specific weighting noise, and propose \textsc{DecompR} to address this problem through counterfactual-calibrated estimation--aggregation separation.

\begin{figure*}[!t]
\centering
\includegraphics[width=\textwidth]{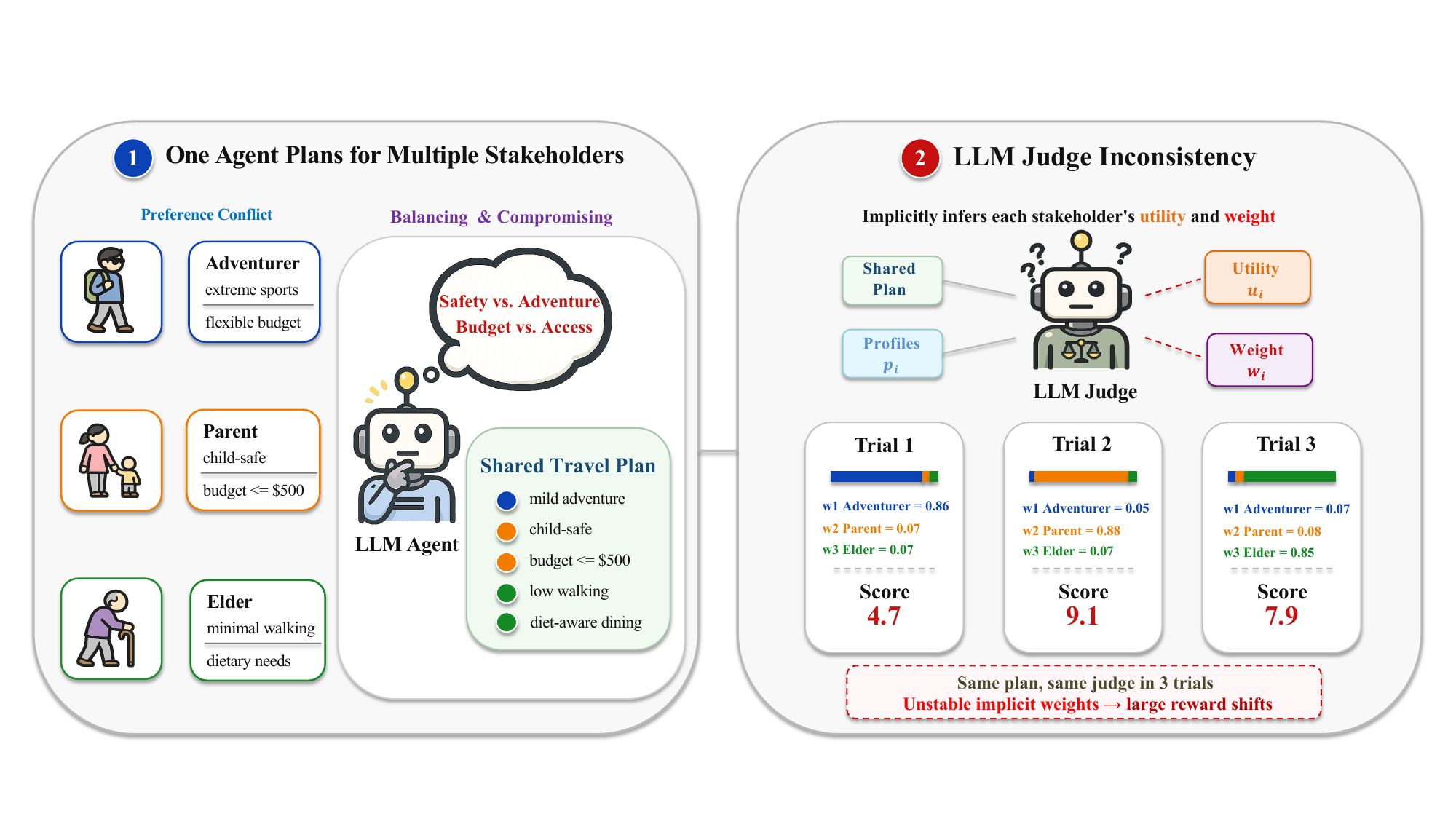}
\caption{(Left) A single LLM agent synthesizes conflicting stakeholder preferences into one shared plan. (Right) A holistic LLM judge implicitly assigns per-stakeholder utilities $u_i$ and weights $w_i$; repeated evaluations of the same plan yield different implicit weights and thus different scalar rewards.}
\label{fig:problem}
\end{figure*}

\section{Related Work}
\label{sec:related}

\subsection{Reward Evaluation for Open-Ended Tasks}

LLM-as-Judge \citep{zheng2023judging} enables reward estimation without verifiable ground truth, but exhibits well-documented inconsistencies: position bias \citep{wang2023large}, verbosity bias \citep{saito2023verbosity}, self-preference bias \citep{panickssery2024llm}, and intra-rater instability \citep{haldar2025rating}. Structured approaches improve reliability in two ways: explicit scoring criteria---rubrics \citep{hashemi2024llmrubric}, checklists \citep{lee2025checkeval}, and chain-of-thought scoring \citep{liu2023geval}---decompose evaluation into \emph{quality dimensions}, while trained evaluators---fine-tuned judges or generative reward models \citep{zhu2025judgelm, zhang2024generative}---replace direct prompting with supervised scoring models. These methods primarily address \emph{estimation} noise: making a specified scoring criterion or learned evaluator more reliable. Multi-stakeholder rewards add a separate \emph{aggregation} problem: the judge must decide how to weigh competing stakeholders, a trade-off these methods leave implicit.

\subsection{Multi-Objective Alignment}

Multi-objective alignment decomposes reward into predefined quality dimensions or controllable attributes (e.g., helpfulness, safety) and navigates their trade-offs via multi-attribute feedback \citep{dong2023steerlm, wang2024helpsteer2}, multi-objective reward models \citep{wang2024interpretable, wang2024arithmetic}, reward-conditioned generation \citep{yang2024rewards}, or policy merging \citep{rame2024rewarded, zhou2024modpo, jang2023personalized}. Our setting differs in that the aggregation axes are \emph{people}---varying in number, identity, and constraints per query---whose preferences may directly conflict. This requires query-specific aggregation and creates the weighting-noise problem we formalize in \S\ref{sec:theory}.

\subsection{GRPO and Reward Noise}

GRPO \citep{shao2024deepseekmath, guo2025deepseek} computes advantages via group-relative normalization, making gradient direction depend entirely on relative reward quality within each group. Recent work shows that group-relative advantages can be biased by prompt difficulty \citep{yang2026grpo_biased}, that noisy or corrupted rewards require explicit correction to avoid biased gradients \citep{mansouri2025noise_grpo}, and that multi-objective GRPO is vulnerable to reward hacking by disproportionately optimizing higher-variance reward components \citep{ichihara2025mogrpo}. These works study how GRPO responds to noise once a reward signal is given. We ask a prior question: whether the LLM-judge signal itself is reliable for multi-stakeholder tasks, why its variance arises, and how to reduce it before it enters GRPO.

\begin{table*}[t]
\centering
\small
\begin{tabular}{@{}ll cc cc cc cc cc@{}}
\toprule
& & \multicolumn{2}{c}{$n{=}2$} & \multicolumn{2}{c}{$n{=}3$} & \multicolumn{2}{c}{$n{=}5$} & \multicolumn{2}{c}{$n{=}8$} & & \\
\cmidrule(lr){3-4} \cmidrule(lr){5-6} \cmidrule(lr){7-8} \cmidrule(lr){9-10}
\textbf{Judge} & \textbf{Quality} & $\sigma^2_{\text{Rep}}$ & $\sigma^2_{\text{Sem}}$ & $\sigma^2_{\text{Rep}}$ & $\sigma^2_{\text{Sem}}$ & $\sigma^2_{\text{Rep}}$ & $\sigma^2_{\text{Sem}}$ & $\sigma^2_{\text{Rep}}$ & $\sigma^2_{\text{Sem}}$ & \textbf{Growth} & \textbf{S/R} \\
\midrule
\multirow{3}{*}{Qwen3.5-Plus}
 & Low     & 0.20 & 0.52 & 0.18 & 0.26 & 0.21 & 0.30 & 0.16 & 0.21 & 0.40 & 1.29 \\
 & Medium  & 0.15 & 0.59 & 0.63 & 0.68 & 0.67 & 0.73 & 0.25 & 0.46 & \textbf{0.78} & 1.87 \\
 & High    & 0.18 & 0.27 & 0.25 & 0.30 & 0.28 & 0.57 & 0.19 & 0.46 & \textbf{1.71} & 2.43 \\
\midrule
\multirow{3}{*}{Gemini-3-Flash}
 & Low     & 0.00 & 0.93 & 0.00 & 0.12 & 0.00 & 0.18 & 0.00 & 0.15 & 0.16 & $\infty$ \\
 & Medium  & 0.00 & 0.25 & 0.00 & 0.81 & 0.00 & 1.72 & 0.00 & 1.67 & \textbf{6.67} & $\infty$ \\
 & High    & 0.00 & 0.64 & 0.00 & 0.64 & 0.00 & 1.49 & 0.00 & 1.46 & \textbf{2.27} & $\infty$ \\
\midrule
\multirow{3}{*}{DeepSeek-V4-Flash}
 & Low     & 0.43 & 1.14 & 0.57 & 0.45 & 0.37 & 0.51 & 0.26 & 0.39 & 0.34 & 1.54 \\
 & Medium  & 0.47 & 1.10 & 1.06 & 1.52 & 1.77 & 1.46 & 1.31 & 1.32 & \textbf{1.20} & 1.01 \\
 & High    & 0.50 & 0.68 & 1.14 & 1.09 & 0.91 & 1.72 & 0.49 & 1.22 & \textbf{1.81} & 2.49 \\
\midrule
\multicolumn{2}{@{}l}{\textit{All (Med+High)}} & 0.22 & 0.59 & 0.53 & 0.84 & 0.62 & 1.28 & 0.42 & 1.10 & \textbf{1.87} & 2.64 \\
\multicolumn{2}{@{}l}{\textit{All (Low)}} & 0.21 & 0.86 & 0.25 & 0.28 & 0.19 & 0.33 & 0.14 & 0.25 & \textbf{0.29} & 1.80 \\
\bottomrule
\end{tabular}
\caption{$\sigma^2_{\text{Rep}}$ = exact-repeat variance; $\sigma^2_{\text{Sem}}$ = presentation-induced reward inconsistency, measured as score variance over semantics-preserving variants of the same response; Growth = $\sigma^2_{\text{Sem}}$\textsubscript{$n$=8}/$\sigma^2_{\text{Sem}}$\textsubscript{$n$=2}; S/R = $\sigma^2_{\text{Sem}}$/$\sigma^2_{\text{Rep}}$ at $n{=}8$. Gemini-3-Flash: $\sigma^2_{\text{Rep}}{=}0$ at $T{=}0$.}
\label{tab:scaling}
\end{table*}

\section{Empirical Analysis: Multi-Stakeholder Reward Consistency}
\label{sec:empirical}

Multi-stakeholder planning lacks verifiable ground truth, so RL training must rely on LLM judges or reward models to provide the learning signal. We empirically investigate whether current LLM judges can provide a consistent reward signal for this setting---where a single plan must reconcile heterogeneous and sometimes conflicting stakeholder preferences.

\paragraph{Setup.} We construct 60 seed responses from real multi-stakeholder travel-planning queries, spanning stakeholder counts $n\in\{2,3,5,8\}$ and three quality levels (high/medium/low). For each seed we generate $55$ \emph{semantics-preserving variants} (11 variant families $\times$ 5 versions) via rule-based transforms (order permutation, formatting controls) and LLM-based rewrites (causal-direction flipping, evidence relocation, paraphrase, topic repositioning), all preserving itinerary content, constraint satisfaction, and plan quality. A human evaluator would assign these variants the same score; we define $\sigma^2_{\text{Sem}}$ as the score variance across variants for the same seed response. We additionally collect 5 exact repeats of the unchanged input as a noise baseline, yielding $\sigma^2_{\text{Rep}}$, the judge's intrinsic stochasticity. Their ratio $\sigma^2_{\text{Sem}}/\sigma^2_{\text{Rep}} > 1$ indicates that surface presentation adds variance beyond intrinsic noise. All evaluations use greedy decoding ($T{=}0$) across three state-of-the-art LLM judges: Qwen3.5-Plus \citep{qwen2026qwen35}, Gemini-3-Flash \citep{google2025gemini3flash}, and DeepSeek-V4-Flash \citep{deepseekai2026deepseekv4}. Full details are in Appendix~\ref{app:setup}.

\subsection{Score Instability Scales with Stakeholder Count}
\label{sec:exp_scaling}

Table~\ref{tab:scaling} shows that \textbf{score variance scales with stakeholder count} for medium- and high-quality responses: pooled Growth${=}1.87$, with Gemini reaching $6.67{\times}$. At $n{=}8$, the Sem/Rep ratio reaches 2.64 for medium+high responses: merely changing how stakeholders are discussed---their ordering, where evidence is placed---causes $2.6{\times}$ more score variation than the judge's own stochasticity. Low-quality responses show no such scaling (Growth${=}0.29$), likely due to a floor effect: scores cluster near the bottom of the scale ($\mu{\approx}2.5$--3.2), leaving the judge little room to vary. This is especially concerning for GRPO, which relies on within-group reward comparisons: unstable scores for medium- and high-quality responses---the candidates GRPO must distinguish among---can flip which responses are reinforced.

Not all variant types contribute equally: discourse changes tied to stakeholder presentation (causal framing, evidence placement, coverage order) produce the largest variance growth as $n$ increases, while low-level formatting changes show little growth (Appendix~\ref{app:variant_breakdown}). The instability is thus stakeholder-specific, not generic surface noise.

\subsection{The Stability--Expressiveness Trade-off in Structured Scoring}
\label{sec:variance_reduction}

We next test whether structured evaluation protocols can reduce this presentation-induced variance. We compare direct holistic scoring with three structured protocols (Appendix~\ref{app:rubric}): \textbf{Rubric} scoring over fixed dimensions with fixed weights \citep{hashemi2024llmrubric}, \textbf{Checklist} with binary verification and programmatic aggregation \citep{lee2025checkeval}, and \textbf{Decomposed} scoring in two variants---\emph{adaptive}, where the judge assigns per-stakeholder weights $w_i$ and satisfactions $\hat{u}_i$, and \emph{uniform}, a diagnostic recomputation using $w_i{=}1/n$ with the same $\hat{u}_i$. All use Gemini-3-Flash at $T{=}0$ ($\sigma^2_{\text{Rep}}{=}0$) so that all observed variance is purely presentation-induced. We focus on medium- and high-quality responses---the quality range most relevant to GRPO, where the policy must distinguish among competitive candidates.

\begin{table}[t]
\centering
\resizebox{\columnwidth}{!}{%
\begin{tabular}{@{}l cccc c@{}}
\toprule
\textbf{Method} & $n{=}2$ & $n{=}3$ & $n{=}5$ & $n{=}8$ & \textbf{Growth} \\
\midrule
Direct      & 0.45 & 0.72 & 1.61 & 1.57 & 3.50 \\
Rubric      & 0.70 & 0.65 & 0.79 & 0.82 & 1.18 \\
Checklist   & 1.10 & 1.06 & 1.47 & 2.22 & 2.01 \\
Decomposed (adaptive) & 0.61 & 0.87 & 0.88 & 0.96 & 1.58 \\
Decomposed (uniform)  & 0.60 & 0.81 & 0.83 & 0.72 & 1.20 \\
\bottomrule
\end{tabular}}
\caption{$\sigma^2_{\text{Sem}}$ by evaluation method (Gemini-3-Flash, $T{=}0$, medium+high only).}
\label{tab:structured}
\end{table}

The key insight is that structure helps reduce presentation-induced variance, but online judge-adaptive stakeholder weighting reintroduces aggregation instability. Rubric (Growth 1.18) and Decomposed with fixed uniform weights (1.20) are most stable, while the controlled decomposed comparison shows that judge-assigned adaptive weights increase Growth from 1.20 to 1.58 using the same $\hat{u}_i$. These stable baselines are not sufficient as final rewards: Rubric aggregates generic quality dimensions rather than stakeholder utilities, and uniform weights ignore query-specific constraint asymmetry. This motivates a two-stage design: estimate per-stakeholder satisfaction online, but anchor non-uniform aggregation weights outside the scoring call. We next quantify the resulting weight-induced score shifts.

\subsection{Weight Drift: The Cost of Adaptive Aggregation}
\label{sec:exp_implicit_weights}

We isolate the instability introduced by online stakeholder-weight assignment. For each variant group, we normalize the resulting weight-induced score shift by the within-group final-score standard deviation $\sigma$, making it comparable to the reward differences used for GRPO ranking.

Decomposed (adaptive) asks the judge to output per-stakeholder weight $w_i$ and satisfaction $\hat{u}_i$, scoring $\sum_i w_i \hat{u}_i$. Since the query is unchanged across variants, the target aggregation weights under any fixed reward definition should remain fixed. We use the variant-mean $\bar{w}_i$ as reference and measure the score shift from weight drift:
\begin{equation}
\Delta R_w=\sum_i (w_i-\bar{w}_i)\hat{u}_i .
\end{equation}

\begin{table}[t]
\centering
\resizebox{\columnwidth}{!}{%
\begin{tabular}{@{}c cc ccc c@{}}
\toprule
$n$ & $w$ CV & $\hat{u}$ CV & Mean$\frac{|\Delta R_w|}{\sigma}$ & P95$\frac{|\Delta R_w|}{\sigma}$ & $\frac{\text{Shift Range}}{\sigma}$ & $w$ Var\% \\
\midrule
2  & 0.050 & 0.236 & 2.6\% & 10.9\% & 18.2\% &  1.9 \\
3  & 0.066 & 0.143 & 9.4\% & 27.9\% & 51.4\% &  6.0 \\
5  & 0.067 & 0.201 & 10.7\% & 37.7\% & 61.4\% &  7.4 \\
8  & 0.106 & 0.183 & 14.4\% & 37.9\% & 81.2\% &  5.3 \\
\bottomrule
\end{tabular}}
\caption{Weight drift as a fraction of within-group score spread (Gemini-3-Flash, $T{=}0$). $\sigma$: mean within-group standard deviation of the final score, computed per variant group then averaged over groups with the same $n$. CV${=}\sigma/\mu$. $w$~Var\%: fraction of total score variance explained by weight drift.}
\label{tab:weight_drift}
\end{table}

\paragraph{Multiplier effect.}
Table~\ref{tab:weight_drift} normalizes weight-induced score shifts by the within-group score standard deviation $\sigma$, directly showing how much of the advantage signal weight drift can corrupt. Weight drift contributes modestly to average variance ($w$~Var\%${=}2$--$7\%$), yet at $n{=}8$ it typically shifts a response's score by $14\%$ of $\sigma$ (mean), with $38\%$ at the 95th percentile and up to $81\%$ in the worst case (shift range). The mechanism is multiplicative: even small weight fluctuations ($w$~CV${\le}0.11$) are multiplied by cross-stakeholder satisfaction differences, so their score impact grows with the spread of $\hat{u}_i$ rather than with weight variance alone. Appendix~Table~\ref{tab:multiplier_example} gives the concrete breakdown for one such case.

These magnitudes imply that weight noise alone can flip the sign of GRPO advantages for response pairs whose reward gaps lie within a one-standard-deviation band. The next section analyzes how weight noise enters reward variance and which factors amplify its effect.

\section{Theoretical Analysis: Decomposing Judge Variance}
\label{sec:theory}

Section~\ref{sec:empirical} documented \emph{what} happens: LLM-as-a-Judge scoring is inconsistent despite structured methods. This section analyzes \emph{why} and \emph{how it affects GRPO}: \S\ref{sec:formulation}--\ref{sec:scaling} trace the inconsistency to the judge's unstable implicit weighting of stakeholders---an \emph{aggregation error} amplified by utility dispersion---and \S\ref{sec:snr_analysis} shows when this error corrupts GRPO's advantage signal. The semantics-preserving variants in Section~\ref{sec:empirical} hold $R^*(y)$ fixed, so their score spread estimates the judge-error scale $\sigma_\xi^2$; GRPO is affected when this same-response noise is large relative to the true reward-gap spread within a sampled group.
\subsection{Variance Decomposition}
\label{sec:formulation}

Let $u_i^*(y)\in[0,1]$\footnote{The empirical utility heads report scores on a $[1,10]$ scale; we linearly normalize to $[0,1]$ for this theoretical notation. Variance and SNR scale by the corresponding constant factor.} be stakeholder $i$'s latent satisfaction and $w_i^*$ fixed target weights with $\sum_i w_i^*=1$. We analyze the ideal cardinal welfare reward \citep{harsanyi1955cardinal}:
\begin{equation}
    R^*(y) = \sum_{i=1}^{n} w_i^* \cdot u_i^*(y)
    \label{eq:true_reward}
\end{equation}

We model a holistic judge as implicitly estimating both utilities and weights:
\begin{equation*}
    \hat{R}(y)=\sum_i (w_i^*+\eta_i)(u_i^*(y)+\delta_i)+\epsilon ,
\end{equation*}
where $\delta_i$ is utility-estimation noise, $\eta_i$ is implicit weight drift, and $\epsilon$ is residual score-level noise. Normalization gives $\sum_i\eta_i=0$.

\begin{theorem}[Three-Term Variance Decomposition]
\label{thm:decomposition}
Assume zero-mean noise ($E[\delta_i] = E[\eta_i] = E[\epsilon] = 0$), independence across the three noise groups ($\boldsymbol{\delta}\perp\boldsymbol{\eta}\perp\epsilon$), and independence within $\{\delta_i\}$; the $\eta_i$ may be correlated. Then:
\begin{align}
    \mathrm{Var}[\hat{R}(y)]
    &= \underbrace{\sum_{i} (w_i^*)^2 \sigma_{\delta_i}^2}_{\text{Term I: Estimation}}
     + \underbrace{\mathrm{Var}\!\left[\textstyle\sum_i \eta_i u_i^*\right]}_{\text{Term II: Aggregation}} \nonumber \\
    &\quad + \underbrace{\sigma_\epsilon^2}_{\text{Term III: Shared}}
     + \underbrace{\sum_i \sigma_{\eta_i}^2 \sigma_{\delta_i}^2}_{C_{\text{cross}}}
    \label{eq:decomposition}
\end{align}
(Proof in Appendix~\ref{app:proof_decomposition}.)
\end{theorem}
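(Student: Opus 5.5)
The plan is a direct second-moment computation. Fix the response $y$, so that $u_i^*(y)$ and $w_i^*$ are deterministic constants and the only randomness is in $(\boldsymbol{\delta},\boldsymbol{\eta},\epsilon)$. Expanding the product in the judge model gives
\begin{equation*}
\hat{R}(y)=R^*(y)+\underbrace{\textstyle\sum_i w_i^*\delta_i}_{A}+\underbrace{\textstyle\sum_i \eta_i u_i^*}_{B}+\underbrace{\textstyle\sum_i \eta_i\delta_i}_{C}+\epsilon .
\end{equation*}
Since $R^*(y)$ is constant, $\mathrm{Var}[\hat{R}(y)]=\mathrm{Var}[A+B+C+\epsilon]$. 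I will show that the four summands are pairwise uncorrelated, and then identify each of their variances with one of the four terms in \eqref{eq:decomposition}. Throughout I assume finite second moments, so that every expectation below exists.

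\textbf{Step 1: all cross-covariances vanish.} Each of $A,B,C,\epsilon$ has mean zero. This is immediate for $A$, $B$ and $\epsilon$. For $C$, independence of $\boldsymbol{\eta}$ and $\boldsymbol{\delta}$ gives $E[\eta_i\delta_i]=E[\eta_i]E[\delta_i]=0$. Each covariance is therefore a plain expectation of a product, which I evaluate term by term.

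\begin{itemize}
\item $E[AB]=\sum_{i,j}w_i^*u_j^*\,E[\delta_i]E[\eta_j]=0$, using $\boldsymbol{\delta}\perp\boldsymbol{\eta}$.
\item $E[AC]=\sum_{i,j}w_i^*\,E[\eta_j]\,E[\delta_i\delta_j]=0$, because $E[\eta_j]=0$. No assumption on the correlation of the $\eta$'s is needed here.
\item $E[BC]=\sum_{i,j}u_i^*\,E[\eta_i\eta_j]\,E[\delta_j]=0$, because $E[\delta_j]=0$. Again the $\eta_i$ may be arbitrarily correlated.
\item All covariances with $\epsilon$ vanish, since $\epsilon$ is independent of $(\boldsymbol{\delta},\boldsymbol{\eta})$ and has mean zero.
\end{itemize}

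\textbf{Step 2: the individual variances.}

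\begin{itemize}
\item By independence within $\{\delta_i\}$, $\mathrm{Var}[A]=\sum_i (w_i^*)^2\sigma_{\delta_i}^2$, which is Term I.
\item $\mathrm{Var}[B]$ is left unexpanded, since the $\eta_i$ may be correlated (for instance through $\sum_i\eta_i=0$). This is exactly Term II.
\item $\mathrm{Var}[\epsilon]=\sigma_\epsilon^2$, which is Term III.
\item For $C$, write $\mathrm{Var}[C]=E[C^2]=\sum_{i,j}E[\eta_i\eta_j]\,E[\delta_i\delta_j]$. Because the $\delta$'s are independent with mean zero, $E[\delta_i\delta_j]=\sigma_{\delta_i}^2\mathbf{1}[i=j]$. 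This kills every off-diagonal term, including the ones carrying the possibly nonzero correlations $E[\eta_i\eta_j]$. What remains is $\sum_i E[\eta_i^2]\sigma_{\delta_i}^2=\sum_i\sigma_{\eta_i}^2\sigma_{\delta_i}^2=C_{\text{cross}}$, using $E[\eta_i]=0$.
\end{itemize}

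Summing these four variances gives \eqref{eq:decomposition}.

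The only delicate point is handling the bilinear term $C$ and its covariances when the $\eta_i$ are correlated. The approach is never to expand over $\eta$ correlations directly. Instead, condition on or factor out $\boldsymbol{\eta}$, and let the mean-zero, mutually independent $\delta$'s annihilate every off-diagonal and cross term. I would state this factorization $E[f(\boldsymbol{\eta})g(\boldsymbol{\delta})]=E[f(\boldsymbol{\eta})]\,E[g(\boldsymbol{\delta})]$ once and reuse it in Steps 1 and 2.
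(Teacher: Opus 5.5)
Your proof is correct and follows the same route as the paper's main argument. You expand $\hat{R}(y)$ into $R^*(y)+A+B+C+\epsilon$, show that all cross-covariances vanish, and read off the four variances, with $\mathrm{Var}[C]$ collapsing to $\sum_i\sigma_{\eta_i}^2\sigma_{\delta_i}^2$ because the independent mean-zero $\delta_i$ kill the off-diagonal terms. Your term-by-term check that $E[AC]=E[BC]=0$ uses the zero means, since $B$ and $C$ share $\boldsymbol{\eta}$ and independence alone would not suffice; this is more careful than the paper's one-line appeal to independence (the paper additionally gives an equivalent law-of-total-variance derivation conditioning on $\boldsymbol{\eta}$).
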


\paragraph{Insight.} Term~I is reducible via structured evaluation (\S\ref{sec:variance_reduction}). Term~II captures an aggregation-specific error: unless the aggregation rule is fixed, zero-sum weight drift $\eta_i$ is multiplied by utility differences, so its score impact is amplified when stakeholder satisfactions are dispersed (as measured in Table~\ref{tab:weight_drift}). This is the key distinction from single-user noise: multi-stakeholder reward variance includes a component tied to weight noise that is absent in single-reward settings.

\subsection{Conditional Scaling and Irreducibility}
\label{sec:scaling}

We next characterize how Term~II (weighting noise) depends jointly on weight-drift variance, cross-stakeholder utility dispersion, and stakeholder count.

\begin{proposition}[Conditional Scaling of Term~II]
\label{prop:scaling}
For $n\ge2$, assume exchangeable zero-sum weight noise: $\sum_i\eta_i=0$, $\mathrm{Var}[\eta_i]=\sigma_\eta^2(n)$ for all $i$, and a common off-diagonal covariance. Then:
\begin{align*}
    \text{Term~II}
    &= \frac{n}{n-1}\,\sigma_\eta^2(n)
       \sum_{i=1}^n (u_i^* - \bar{u})^2 \\
    &= \frac{n^2}{n-1}\,\sigma_\eta^2(n) \mathrm{Var}_i[u_i^*].
\end{align*}
where $\bar{u}=\frac{1}{n}\sum_i u_i^*$ and $\mathrm{Var}_i[u_i^*]\triangleq \frac{1}{n}\sum_i(u_i^*-\bar{u})^2$ is across-stakeholder variance for a fixed response. (Proof in Appendix~\ref{app:proof_scaling}.)
\end{proposition}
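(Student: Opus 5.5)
The plan is to compute Term~II directly as a quadratic form in the fixed vector $u^*=(u_1^*,\dots,u_n^*)$. The only randomness comes from $\boldsymbol{\eta}$, whose covariance matrix $\Sigma_\eta$ the exchangeability and zero-sum hypotheses pin down completely.

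We proceed in four steps.

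\textbf{Step 1: quadratic form.} Since $u^*$ is fixed for a given response, $\mathrm{Var}[\sum_i \eta_i u_i^*]=u^{*\top}\Sigma_\eta u^*$, where $\Sigma_\eta$ has diagonal entries $\sigma_\eta^2(n)$ and common off-diagonal entry $c$.

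\textbf{Step 2: determine $c$.} The zero-sum constraint gives $\mathrm{Var}[\sum_i\eta_i]=0$. Expanding,
\[
n\sigma_\eta^2(n)+n(n-1)c=0, \qquad\text{so}\qquad c=-\frac{\sigma_\eta^2(n)}{n-1}.
\]
This step needs $n\ge2$, which is the only place that hypothesis enters.

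\textbf{Step 3: rewrite $\Sigma_\eta$ as a multiple of the centering matrix.} We have
\[
\Sigma_\eta=\Big(\sigma_\eta^2+\tfrac{\sigma_\eta^2}{n-1}\Big)I-\tfrac{\sigma_\eta^2}{n-1}\mathbf{1}\mathbf{1}^\top=\tfrac{n}{n-1}\sigma_\eta^2(n)\Big(I-\tfrac1n\mathbf{1}\mathbf{1}^\top\Big).
\]
Because $I-\tfrac1n\mathbf{1}\mathbf{1}^\top$ is the projection onto the complement of $\mathbf{1}$,
\[
u^{*\top}\Big(I-\tfrac1n\mathbf{1}\mathbf{1}^\top\Big)u^*=\sum_i u_i^{*2}-n\bar u^2=\sum_i(u_i^*-\bar u)^2.
\]
This gives the first equality of Proposition~\ref{prop:scaling}.

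\textbf{Step 4: second equality.} This is immediate from the definition $\sum_i(u_i^*-\bar u)^2=n\,\mathrm{Var}_i[u_i^*]$.

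There is no real obstacle here; the only care needed is in Step~2. The covariance structure has to be fully determined by the constraint rather than assumed. It also helps to note that the constant vector lies in the kernel of $\Sigma_\eta$, which is exactly why only the dispersion of $u^*$ around $\bar u$ survives.

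As a sanity check, a uniform shift $u_i^*\mapsto u_i^*+a$ leaves Term~II unchanged, since $\sum_i\eta_i a=0$. That is consistent with the final formula depending only on $\mathrm{Var}_i[u_i^*]$.
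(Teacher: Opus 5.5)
Your proof is correct and matches the paper's. Both arguments first pin down $c=-\sigma_\eta^2(n)/(n-1)$ from $\mathrm{Var}[\sum_i\eta_i]=0$, then reduce Term~II to $\frac{n}{n-1}\sigma_\eta^2(n)\sum_i(u_i^*-\bar u)^2$. The paper does this final reduction by scalar expansion with the identity $\sum_{i\neq j}u_i^*u_j^*=(\sum_i u_i^*)^2-\sum_i(u_i^*)^2$, while your centering-matrix factorization $\Sigma_\eta=\frac{n}{n-1}\sigma_\eta^2(n)(I-\mathbf{1}\mathbf{1}^\top/n)$ is exactly the form the paper records in its remark on exchangeability.
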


\paragraph{Insight.} Proposition~\ref{prop:scaling} shows that Term~II has three necessary factors: weight instability $\sigma_\eta^2(n)$, utility dispersion $\mathrm{Var}_i[u_i^*]$, and the stakeholder-count factor $n^2/(n-1)$. If the judge's implicit weights do not drift, Term~II vanishes. If all stakeholders have equal utility ($\mathrm{Var}_i[u_i^*]{=}0$), zero-sum weight drift is score-invariant. When both weight drift and utility heterogeneity are present, shifting weight toward relatively satisfied or underserved stakeholders changes the scalar reward, with the magnitude scaled by $n^2/(n-1)$. Fixed stakeholder weights remove this channel by setting $\eta_i{=}0$. Structured evaluation can reduce utility-estimation noise (Term~I), but it leaves aggregation noise (Term~II) whenever the judge still chooses the weights online. Empirically, the weight-induced shift range grows from 0.15 ($n{=}2$) to 0.61 ($n{=}8$) in Table~\ref{tab:weight_drift}. Appendix~\ref{app:proof_scaling} derives the symmetric scaling formula and shows how the same mechanism extends to general weight-drift covariance.

\subsection{When Weighting Noise Corrupts GRPO}
\label{sec:snr_analysis}

GRPO updates are driven by each candidate's reward gap from the group mean. In the noise-free target, candidate $j$ should be updated according to its true reward gap $\Delta_j^* = R_j^*-\bar{R}^*$ (or the standardized gap $\Delta_j^*/\mathrm{sd}_k(R_k^*)$). In practice, the judge observes $\hat{R}_j=R_j^*+\xi_j$, so GRPO uses the noisy gap
\begin{equation}
    \hat{\Delta}_j
    = \hat{R}_j-\bar{\hat{R}}
    = \Delta_j^* + (\xi_j-\bar{\xi}).
\end{equation}
Here $\Delta_j^*$ is the \textbf{true reward gap} we want GRPO to use: the
within-group quality difference between candidate $j$ and the group mean.
The second term, $\xi_j-\bar{\xi}$, is the group-centered \textbf{random error}
induced by the LLM-judge inconsistency studied in Section~\ref{sec:empirical}:
even the same semantic response can receive different scores under
presentation changes. This error turns the true reward gap into the noisy
reward gap $\hat{\Delta}_j$ and can flip the advantage sign when it is
comparable to $\Delta_j^*$. We summarize this reliability as:
\begin{equation}
\mathrm{SNR} \triangleq \mathrm{Var}_j[\Delta_j^*]/\sigma_\xi^2.
\label{eq:snr_def}
\end{equation}
Thus SNR compares the true reward-gap signal against the judge-error scale
that creates the noisy reward gap. As $\sigma_\xi^2$ grows, this noise more
easily overwhelms the reward-gap signal that GRPO uses for ranking.

\begin{proposition}[Advantage Sign Correctness]
\label{prop:snr_sign}
Under independent Gaussian reward errors with variance $\sigma_\xi^2$, the probability that GRPO assigns the correct advantage sign to a response one standard deviation from the group mean is:
\begin{equation}
    P(\textrm{correct sign}) = \Phi\!\left(\frac{\sqrt{\mathrm{SNR}}}{\sqrt{1-1/G}}\right)
    \label{eq:p_correct}
\end{equation}
\end{proposition}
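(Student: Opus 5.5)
We condition on the true rewards $R_1^*,\dots,R_G^*$ of the sampled group and treat the errors $\xi_1,\dots,\xi_G$ as i.i.d.\ $\mathcal{N}(0,\sigma_\xi^2)$, independent of the true rewards. The plan is to compute the exact distribution of the noisy gap $\hat{\Delta}_j=\Delta_j^*+(\xi_j-\bar{\xi})$. The sign of $\hat{\Delta}_j$ is unchanged by GRPO's division by the group standard deviation, so it suffices to work with the unnormalized gap.

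\textbf{Step 1: variance of the centered error.} Write $\xi_j-\bar{\xi}=(1-1/G)\xi_j-\frac{1}{G}\sum_{k\neq j}\xi_k$. This is a linear combination of independent Gaussians, so it is Gaussian with mean $0$ and variance
\begin{equation*}
(1-1/G)^2\sigma_\xi^2+\frac{G-1}{G^2}\sigma_\xi^2=\Bigl(1-\frac{1}{G}\Bigr)\sigma_\xi^2 .
\end{equation*}
Hence $\hat{\Delta}_j\sim\mathcal{N}\bigl(\Delta_j^*,(1-1/G)\sigma_\xi^2\bigr)$ conditionally on the true rewards.

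\textbf{Step 2: sign probability.} Suppose $\Delta_j^*>0$; the case $\Delta_j^*<0$ is symmetric. A correct sign means $\hat{\Delta}_j>0$, so
\begin{equation*}
P(\text{correct sign})=\Phi\!\left(\frac{|\Delta_j^*|}{\sigma_\xi\sqrt{1-1/G}}\right).
\end{equation*}

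\textbf{Step 3: plug in the one-standard-deviation condition.} A response lying one standard deviation from the group mean has $|\Delta_j^*|=\mathrm{sd}_k(R_k^*)=\sqrt{\mathrm{Var}_j[\Delta_j^*]}$, because centering does not change the variance. Then $|\Delta_j^*|/\sigma_\xi=\sqrt{\mathrm{SNR}}$ by \eqref{eq:snr_def}, and substituting into Step~2 gives \eqref{eq:p_correct}.

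The main obstacle is interpretive rather than technical. \eqref{eq:p_correct} requires $\mathrm{Var}_j$ in \eqref{eq:snr_def} to be the within-group empirical variance with the $1/G$ normalization, and it requires the errors to be independent of the true rewards. I will state both explicitly as the reading of the proposition's hypotheses. I will also note that the shared score-level term $\epsilon$ from Theorem~\ref{thm:decomposition} cancels under centering only if it is common to all candidates in a group. The independence hypothesis instead treats it as per-candidate, which is the conservative choice.
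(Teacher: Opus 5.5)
Your proposal is correct and follows the paper's proof essentially step for step. It uses the same decomposition $\xi_j-\bar{\xi}=(1-1/G)\xi_j-\frac{1}{G}\sum_{k\neq j}\xi_k$ with variance $(1-1/G)\sigma_\xi^2$, the same Gaussian computation $P(\text{correct sign})=\Phi(|\Delta_j^*|/(\sigma_\xi\sqrt{1-1/G}))$, and the same substitution $|\Delta_j^*|=\sqrt{\mathrm{Var}_j[\Delta_j^*]}$ into the SNR definition. Your added remarks (sign invariance under GRPO's standard-deviation normalization, explicit conditioning on the true rewards) only make explicit what the paper leaves implicit. One small correction: the $1/G$ convention is not actually required, only that the same variance convention defines both the one-standard-deviation condition and SNR.
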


Combining this with Proposition~\ref{prop:scaling}, aggregation noise harms GRPO when Term~II is non-negligible: it increases $\sigma_\xi^2$ most in high-dispersion stakeholder groups, reducing SNR exactly where trade-offs are hardest. In the conditional limit where Term~II is the leading noise source and $\sigma_\eta^2(n)\mathrm{Var}_i[u_i^*]$ does not shrink with $n$, $\mathrm{SNR}(n)=O(1/n)$; substituting into Eq.~\ref{eq:p_correct} gives:
\begin{equation}
    P(\textrm{correct sign}) = \frac{1}{2}+O(1/\sqrt{n}).
    \label{eq:sign_degradation}
\end{equation}
Thus, weighting noise can turn group-relative advantages into a noisy proxy for true quality, pushing GRPO updates toward judge-induced aggregation shifts rather than the intended reward. This motivates \textsc{DecompR}: fix aggregation outside online judge calls so the online weight-drift channel is removed by construction. The derivation and numerical thresholds are given in Appendix~\ref{app:snr_derivation}.

\section{\textsc{DecompR}: Counterfactual-Calibrated Decomposed Reward}
\label{sec:method}

\begin{figure*}[t]
\centering
\begin{tikzpicture}[
    scale=1.12,
    transform shape,
    >=stealth,
    base/.style={draw, rounded corners=2pt, align=center, font=\scriptsize, inner sep=4pt},
    input/.style={base, fill=blue!7, minimum width=2.65cm, minimum height=0.48cm},
    module/.style={base, fill=#1, text width=4.25cm, minimum height=1.7cm},
    module/.default={gray!8},
    rewardbox/.style={base, fill=orange!10, minimum width=5.55cm, minimum height=0.68cm},
    arr/.style={->, thick, gray!65},
    warr/.style={->, thick, violet!65},
    earr/.style={->, thick, green!45!black},
    note/.style={font=\tiny, align=center},
    framebox/.style={draw=blue!35!gray, dashed, rounded corners=7pt, fill=blue!3, line width=0.65pt},
]

\node[framebox, minimum width=11.25cm, minimum height=5.4cm] at (5.35, 0.35) {};

\node[input] (query) at (2.35, 2.35)
  {query + profiles\\$q,p_{1:n}$};
\node[input] (candidate) at (8.35, 2.35)
  {candidate response\\$y$};

\node[module=violet!8, draw=violet!60, minimum height=1.7cm] (weights) at (2.35, 0.72)
  {\textbf{1. Offline Counterfactual Weights}\\[2pt]
   $d_i=\sum_{c\in C_i^{\mathrm{hard}}}\alpha(c)$\\[-1pt]
   $\quad+\gamma\sum_{c\in C_i^{\mathrm{soft}}}\alpha(c)+\beta s_i^{\mathrm{conflict}}$\\[2pt]
   $w_i=\mathrm{softmax}(d_i/\tau_w)$
   };

\node[module=green!8, draw=green!45!black, minimum height=1.7cm] (estimator) at (8.35, 0.72)
  {\textbf{2. Online Per-role Utility Estimation}\\[2pt]
   $\hat{u}_i(y) = \operatorname{Eval}_{\mathrm{grounded}}(p_i,y)$\\[2pt]
   {\tiny hard/soft checks + judge; role isolation}};

\node[rewardbox] (reward) at (5.35, -1.0)
  {$\hat{R}(y)=\displaystyle\sum_i w_i\hat{u}_i(y)$};
\node[note, font=\scriptsize, text=gray!65!black] (out) at (5.35, -1.88)
  {scalar reward for GRPO};

\draw[warr] (query) -- (weights);
\draw[earr] (candidate) -- (estimator);
\draw[earr] (query.east) -- (estimator.north west);
\draw[warr] (weights.south) |- node[pos=0.25, left, note, text=violet!70] {$w_i$} (reward.west);
\draw[earr] (estimator.south) |- node[pos=0.25, right, note, text=green!35!black] {$\hat{u}_i$} (reward.east);
\draw[arr] (reward.south) -- (out.north);

\end{tikzpicture}
\caption{Overview of \textsc{DecompR}: offline query-level weights remove candidate-dependent weight drift, online per-role utilities reduce estimation error, and fixed aggregation produces the scalar reward for GRPO.}
\label{fig:method_overview}
\end{figure*}

\S\ref{sec:theory} shows that one source of holistic reward instability is that the judge's implicit trade-off rule can change across candidate responses. Our method, \textsc{DecompR}, addresses this with counterfactual-calibrated offline query-level aggregation and online grounded per-role utility estimation (Figure~\ref{fig:method_overview}).

\subsection{Counterfactual-Calibrated Aggregation (Removing Online Weight Drift)}
\label{sec:aggregation}

We remove online weight drift by fixing aggregation weights once per query. The calibration follows a counterfactual-sacrifice intuition from bargaining theory \citep{kalai1975other}: stakeholders who lose more from joining the shared task, relative to their solo ideal, should receive higher aggregation priority. Conceptually, with solo plan $y_i^{\mathrm{solo}}$ and joint reference plan $y^{\mathrm{joint}}$:
\begin{equation}
    \Delta_i^{\mathrm{cf}}
    = U_i(y_i^{\mathrm{solo}}) - U_i(y^{\mathrm{joint}}),
    \label{eq:counterfactual_sacrifice}
\end{equation}
These weights may not be uniquely correct, but they incorporate stakeholder fairness into the offline fixed aggregation rule before any candidate response is scored.

In implementation, we do not explicitly compute the counterfactual utility gap in Eq.~\ref{eq:counterfactual_sacrifice}, since this would require costly solo and joint planning estimates. Instead, we use demand difficulty as a lightweight proxy: stakeholders whose requirements are harder to satisfy receive larger weights. This follows the same intuition as Eq.~\ref{eq:counterfactual_sacrifice}: hard-to-satisfy needs should not be silently sacrificed. Concretely, an offline parser estimates difficulty from restrictive hard constraints, dense soft preferences, and direct conflicts; fixed rules then compute:
\begin{equation}
    d_i =
    \sum_{c \in C_i^{\mathrm{hard}}}\alpha(c)
    + \gamma \sum_{c \in C_i^{\mathrm{soft}}}\alpha(c)
    + \beta s_i^{\mathrm{conflict}}
    \label{eq:difficulty}
\end{equation}
where $C_i^{\mathrm{hard}}$ and $C_i^{\mathrm{soft}}$ are parsed constraints, $\alpha(c)$ is a fixed restrictiveness score, $s_i^{\mathrm{conflict}}$ summarizes pairwise tensions involving stakeholder $i$, and $\gamma,\beta\in(0,1)$ are fixed discounts. All quantities are computed from the query and profiles only, before any candidate response is scored. The resulting $d_i$ is a deterministic proxy for $\mathbb{E}[\Delta_i^{\mathrm{cf}}\mid q,p_{1:n}]$, not a candidate score. We then normalize these query-level difficulty scores into weights:
\begin{equation}
    w_i = \frac{\exp(d_i / \tau_w)}{\sum_{j=1}^{n} \exp(d_j / \tau_w)}
    \label{eq:weights}
\end{equation}
where $\tau_w$ controls differentiation ($\tau_w \to \infty$ recovers uniform; $\tau_w \to 0$ concentrates on the hardest stakeholder). Since $d_i$ and $w_i$ are fixed for a given query, candidate-dependent weight drift is removed even if the calibrated weights are only approximate.

\paragraph{Illustrative example.} Table~\ref{tab:weight_example} shows how calibrated weights expose failures hidden by uniform aggregation: the hardest stakeholder receives more weight, so underserving that stakeholder lowers the reward.

\begin{table}[t]
\centering
\small
\begin{tabular}{@{}lccccc@{}}
\toprule
\textbf{Stakeholder} & $d_i$ & $w_i^{\text{u}}$ & $w_i^{\text{c}}$ & $\hat{u}_i$ & \textbf{Sacrifice} \\
\midrule
A (4 hard + 2 soft) & 5.0 & 0.33 & \textbf{0.67} & 0.40 & 50\% \\
B (0 hard + 2 soft) & 1.0 & 0.33 & 0.09 & 0.95 & 3\% \\
C (2 hard + 2 soft) & 3.0 & 0.33 & 0.24 & 0.80 & 11\% \\
\midrule
\multicolumn{3}{@{}l}{$\hat{R}_{\text{uniform}} = 0.72$} & \multicolumn{3}{l}{$\hat{R}_{\text{calib}} = \textbf{0.55}$} \\
\bottomrule
\end{tabular}
\caption{Toy three-traveler calibration ($\gamma{=}0.5$, $\tau_w{=}2$). Values are illustrative; implementation uses the query-level difficulty proxy $d_i$ rather than computing solo optima online.}
\label{tab:weight_example}
\end{table}

\subsection{Online Per-Role Grounded Estimation (Reducing Term~I)}
\label{sec:estimation}

Given a candidate response $y$, \textsc{DecompR} estimates each stakeholder's utility independently:
\begin{equation}
    \hat{u}_i(y) = \operatorname{Eval}_{\mathrm{grounded}}(p_i,y).
    \label{eq:utility_estimator}
\end{equation}
The evaluator only asks whether $y$ satisfies profile $p_i$, rather than asking one judge to balance all stakeholders at once. This per-role isolation is designed to keep the estimation task in the simpler $n{=}1$ regime, avoiding the context and attention load of holistic multi-stakeholder judging.

Within each role, signals are layered by decreasing determinism: tool-verified facts ($\sigma^2{=}0$), programmatic constraint checks ($\sigma^2{\approx}0$), LLM-judged soft preferences, and a residual subjective checklist with weight at most $20\%$. Thus most reward mass comes from deterministic or low-variance components.

\section{Experiments}
\label{sec:experiments}

We first verify that \textsc{DecompR}'s aggregation design reduces reward variance (\S\ref{sec:reward_consistency}), then evaluate whether this translates to better plan quality in end-to-end GRPO training (\S\ref{sec:e2e}).

\subsection{Reward Consistency}
\label{sec:reward_consistency}

Using the same setup as \S\ref{sec:variance_reduction}, we recompute decomposed scores under different fixed weight schemes while holding $\hat{u}_i$ unchanged, isolating the effect of aggregation weights on reward stability.

\begin{table}[t]
\centering
\small
\resizebox{\columnwidth}{!}{%
\begin{tabular}{@{}l cccc c@{}}
\toprule
\textbf{Aggregation} & $n{=}2$ & $n{=}3$ & $n{=}5$ & $n{=}8$ & \textbf{Growth} \\
\midrule
Adaptive (judge $w_i$) & 0.61 & 0.87 & 0.88 & 0.96 & 1.58 \\
Uniform ($w_i{=}1/n$)  & 0.60 & 0.81 & 0.83 & 0.72 & 1.20 \\
\textsc{DecompR} ($\tau_w{=}3$) & 0.63 & 0.80 & 0.92 & 0.78 & 1.24 \\
\textsc{DecompR} ($\tau_w{=}5$) & 0.64 & 0.79 & 0.88 & 0.75 & 1.18 \\
\textsc{DecompR} ($\tau_w{=}8$) & 0.65 & 0.79 & 0.86 & 0.73 & 1.14 \\
\bottomrule
\end{tabular}
}
\caption{$\sigma^2_{\text{Sem}}$ under different fixed aggregation weights (same $\hat{u}_i$ from decomposed evaluation). Growth = $\sigma^2_{\text{Sem}}$\textsubscript{$n$=8}/$\sigma^2_{\text{Sem}}$\textsubscript{$n$=2}. $\tau_w$ controls weight differentiation (Eq.~\ref{eq:weights}).}
\label{tab:reward_consistency}
\end{table}

Table~\ref{tab:reward_consistency} shows that any per-query fixed weight eliminates candidate-dependent weight drift, but the degree of $n$-scaling reduction depends on weight concentration $\sum_i w_i^2$. Adaptive weights (Growth${=}1.58$) allow drift across variants; uniform weights (Growth${=}1.20$) minimize $\sum_i w_i^2{=}1/n$ but lose query-specific differentiation. \textsc{DecompR} with $\tau_w{=}5$ achieves Growth${=}1.18$, close to uniform stability while preserving meaningful constraint-based prioritization; $\tau_w{=}8$ is slightly more stable (Growth${=}1.14$) but closer to uniform weighting. The temperature $\tau_w$ controls the stability--expressiveness trade-off: smaller $\tau_w$ concentrates weight on high-difficulty stakeholders (more expressive but amplifies their estimation noise); larger $\tau_w$ approaches uniform (more stable but loses stakeholder differentiation).

\subsection{End-to-End Training}
\label{sec:e2e}

We next evaluate whether the variance reduction translates to better downstream plan quality when used as the GRPO reward signal.

\paragraph{Setup.} We use Qwen3-30B-A3B-Thinking-2507 \citep{qwen2025qwen3} as the base policy model: after supervised fine-tuning on expert trajectories generated by Qwen3.5-Plus, it undergoes GRPO training on 6{,}000 queries with the four scoring methods from \S\ref{sec:variance_reduction} (Direct, Rubric, Checklist, \textsc{DecompR} with $\tau_w{=}5$) as reward functions. We additionally include the base model (without RL) as a reference. Best checkpoint is selected within 400 steps; evaluation uses MR-TravelBench \citep{cheng2026grouptravelbench} (650 held-out tasks $\times$ 3 trials). Additional training details are in Appendix~\ref{app:training_details}.

\begin{table*}[!t]
\centering
\small
\resizebox{\textwidth}{!}{%
\begin{tabular}{@{}l cccc c cccc c cccc@{}}
\toprule
& \multicolumn{4}{c}{\textbf{Group Utility} $\uparrow$} & & \multicolumn{4}{c}{\textbf{Group Fairness} $\uparrow$} & & \multicolumn{4}{c}{\textbf{Pref.\ Completeness} $\uparrow$} \\
\cmidrule(lr){2-5} \cmidrule(lr){7-10} \cmidrule(lr){12-15}
\textbf{Reward} & Easy & Med & Hard & \textbf{All} & & Easy & Med & Hard & \textbf{All} & & Easy & Med & Hard & \textbf{All} \\
\midrule
\textsc{DecompR} & \textbf{5.94} & \textbf{7.35} & \textbf{7.85} & \textbf{7.07} && 60.10 & \textbf{49.58} & \textbf{39.84} & \textbf{49.82} && \textbf{24} & \textbf{19} & 15 & \textbf{19} \\
Direct           & 5.34 & 6.68 & 7.42 & 6.50 && 58.52 & 45.20 & 35.82 & 46.42 && 22 & 17 & 15 & 18 \\
Rubric           & 5.49 & 6.30 & 7.14 & 6.31 && 59.76 & 44.69 & 35.86 & 46.61 && 22 & 17 & 14 & 18 \\
Checklist        & 5.21 & 6.25 & 7.38 & 6.28 && 58.14 & 44.82 & 36.51 & 46.23 && 21 & 17 & 15 & 17 \\
\midrule
Base Model       & 5.11 & 6.19 & 7.20 & 6.17 && \textbf{60.50} & 46.52 & 37.69 & 48.11 && 23 & 18 & 14 & 18 \\
\bottomrule
\end{tabular}}
\caption{MR-TravelBench results (650 tasks $\times$ 3 trials, Easy/Medium/Hard = 200/250/200 tasks). \textsc{DecompR} achieves the highest Group Utility (+8.8\% vs Direct, +12.0\% vs Rubric) and Group Fairness (+7.3\% vs Direct), with the advantage growing on Hard tasks (+5.8\% Group Utility, +11.2\% Group Fairness vs Direct). Base Model is the pre-RL reference.}
\label{tab:main_results}
\end{table*}

Table~\ref{tab:main_results} shows that \textsc{DecompR} achieves the best performance on all core metrics. All GRPO configurations improve Group Utility over the Base Model (6.17), confirming that RL training is effective; however, Direct, Rubric, and Checklist rewards all \emph{reduce} Group Fairness below the Base Model (48.11), suggesting that noisy reward signals may encourage GRPO to optimize utility at the expense of equity. Only \textsc{DecompR} improves both simultaneously (Group Utility${=}7.07$, Group Fairness${=}49.82$), consistent with the counterfactual-calibrated weighting that prevents the agent from satisfying only the easy-to-serve travelers. Among GRPO methods, Checklist performs worst on Group Utility (6.28) and Preference Completeness (17), consistent with its high presentation-induced variance (Growth${=}2.01$ in \S\ref{sec:variance_reduction}): binary checks that flip under surface changes produce a noisy reward that misleads GRPO.

The advantage \emph{increases with task difficulty}: on Hard tasks---where conflicts are more severe and satisfaction dispersion is expected to be higher---\textsc{DecompR} outperforms Direct by +5.8\% in utility and +11.2\% in fairness. This aligns with our theoretical analysis: the multiplier effect of weight drift is most damaging in high-dispersion scenarios (\S\ref{sec:theory}), and fixing aggregation outside the judge provides the largest benefit where it matters most.

\section{Conclusion}
\label{sec:conclusion}

This work identifies a central difficulty in multi-stakeholder LLM-judge evaluation: rewards become unreliable when a single judge call estimates stakeholder utilities and decides cross-stakeholder weights. Our theory and experiments show that this coupling creates \emph{weighting noise}, whose score impact grows with satisfaction dispersion and stakeholder count. Structured scoring reduces utility-estimation noise but leaves this online aggregation channel intact.

\textsc{DecompR} addresses this through \emph{estimation-aggregation separation}: online grounded estimation measures per-role utilities, while offline counterfactual calibration fixes auditable aggregation weights before candidate scoring.

More broadly, multi-stakeholder reward modeling remains underexplored relative to single-user settings. We hope this work encourages more systematic study of how LLM systems should evaluate, aggregate, and learn from competing human preferences.

\section*{Limitations}

Our work focuses on the multi-party travel planning domain as a representative case of multi-stakeholder alignment. While we believe the theoretical insights generalize to other multi-stakeholder settings (e.g., negotiation, collaborative writing, group recommendation), empirical validation in these domains remains for future work. Additionally, our programmatic verification layers rely on the availability of structured tool APIs; in domains where such tools are unavailable, a larger fraction of reward signal would necessarily depend on LLM-based estimation. Finally, existing benchmarks for multi-stakeholder planning are scarce, which limits the scale of end-to-end GRPO training experiments we can conduct; constructing richer multi-party benchmarks and running full-scale RL validation is an important direction for future work.

\bibliography{custom}

\appendix

\section{Multi-Stakeholder Reward Consistency Experiment Details}
\label{app:setup}

\paragraph{Scope.} This appendix section supports the reward-consistency analysis in \S\ref{sec:empirical}. We study multi-stakeholder travel planning, where a single itinerary must satisfy multiple travelers with heterogeneous constraints such as budget limits, dietary restrictions, mobility needs, safety constraints, and activity preferences. The response content is fixed, and only semantics-preserving presentation variants are scored.

\paragraph{Base responses.} We use five real multi-stakeholder travel queries, each expanded to $n \in \{2,3,5,8\}$ stakeholders. For each query--$n$ pair, we construct one high-, medium-, and low-quality response, yielding $5 \times 4 \times 3 = 60$ seed responses.

\paragraph{Quality bands.} High-quality responses satisfy hard constraints, maintain credible travel information, preserve time feasibility, and cover stakeholder needs in a balanced way. Medium-quality responses are broadly feasible but contain one or two compromises, questionable details, or under-served stakeholders. Low-quality responses contain hard-constraint violations, hallucinated or implausible travel details, or time-line conflicts. These quality bands are pre-constructed and used only as grouping variables; the consistency experiment does not generate additional quality-changing variants.

\subsection{Response Construction and Semantics-Preserving Variants}
\label{app:variants}

\paragraph{Generation.} For each base response, we first use DeepSeek-V4-Flash with tool calls to extract a structured JSON metadata file recording all factual content (locations, times, prices, activities), per-stakeholder constraint satisfaction status, and trade-off decisions. Rule-based variants (e.g., order permutations, evidence placement, formatting changes) operate directly on this metadata to guarantee semantic preservation. LLM-based rewrites are also performed by DeepSeek-V4-Flash, conditioned on the metadata file so that the rewrite cannot introduce information beyond---or omit information present in---the structured record.

\paragraph{Semantic-preservation constraints.} All variants preserve five invariants: (i) itinerary facts such as locations, times, prices, transportation, and activities; (ii) each hard or soft constraint's satisfaction status; (iii) the stakeholder set; (iv) the semantic strength of satisfaction or violation claims; and (v) the response quality band. Variants change only response presentation.

\paragraph{Variant families.} For each base response, we generate eleven semantics-preserving variant families with five versions per family. The exact-repeat baseline is collected separately and is not counted as a variant family.

\begin{table*}[ht]
\centering
\small
\begin{tabular}{@{}llp{9.5cm}@{}}
\toprule
\textbf{Variant family} & \textbf{Method} & \textbf{Operation} \\
\midrule
\multicolumn{3}{@{}l}{\textit{Multi-stakeholder-specific}} \\[2pt]
\quad Stakeholder section order & Rule & Permute per-stakeholder coverage sections in the response. \\
\quad Satisfaction summary order & Rule & Permute stakeholder entries in the satisfaction summary. \\
\quad Causal direction & LLM & Flip trade-off framing between cause$\to$effect and effect$\to$cause. \\
\quad Under-served evidence placement & LLM & Relocate under-served stakeholder evidence to different positions (before/after itinerary, interleaved, etc.). \\
\quad Shared-benefit attribution & Rule & Reorder multi-stakeholder attribution when a shared item serves multiple users. \\
\midrule
\multicolumn{3}{@{}l}{\textit{General presentation controls}} \\[2pt]
\quad Trade-off order & Rule & Permute conflict-explanation paragraphs without changing claims. \\
\quad Paraphrase & LLM & Rewrite wording with synonymous expressions; preserve all facts. \\
\quad Format conversion & Rule & Convert among prose, bullet lists, numbered lists, and markdown tables. \\
\quad Section header & Rule & Replace section headings with synonymous labels (e.g., ``Budget'' $\to$ ``Cost''). \\
\quad Low-level formatting & Rule & Change whitespace, bullet style, emphasis markers only. \\
\quad Topic position & LLM & Move non-stakeholder topical sections to different locations in the response. \\
\bottomrule
\end{tabular}
\caption{Semantics-preserving variant families. \textbf{Rule}: deterministic transforms operating on extracted metadata (order permutation, formatting changes). \textbf{LLM}: DeepSeek-V4-Flash rewrites conditioned on structured metadata to prevent fact drift. Five variants per family per response; exact repeats are a separate baseline.}
\label{tab:variants}
\end{table*}

\paragraph{Exact-repeat baseline.} Each base response is also scored five times with the exact same input (no rewriting). These repeats estimate intrinsic judge noise ($\sigma^2_{\text{Rep}}$).

\paragraph{Scale.} Each base response yields 60 scoring inputs ($11$ variant families $\times$ $5$ versions $+$ $5$ exact repeats). With 60 seeds, this gives $60 \times 60 = 3{,}600$ scoring inputs per judge model and scoring method.

\paragraph{Example.} We illustrate with a real 3-stakeholder query (Xiamen$\to$Fuzhou weekend trip, budget $\leq$500 CNY/person). The three travelers are: user1 (photography, prefers quiet spots), user2 (limited mobility, no hiking), and user3 (tight schedule, wants popular check-in spots). A \emph{stakeholder section order} variant permutes only the discussion order:

\begin{quote}
\small
\textbf{Original:} user1 (photographer) $\to$ user2 (mobility) $\to$ user3 (check-in) \\
\textbf{Variant:} user3 (check-in) $\to$ user1 (photographer) $\to$ user2 (mobility)
\end{quote}

\noindent The itinerary, constraint satisfaction, and all factual content remain identical. Yet this reordering alone causes score shifts of up to 2.0 points on the deterministic Gemini judge (3.5$\to$5.5) and a 4.8-point spread on DeepSeek (3.5--8.3 across five permutations), demonstrating that \emph{who is discussed first} influences the judge's holistic assessment even when all other content is fixed.

\subsection{Judge Scoring Protocols}
\label{app:rubric}

We evaluate four scoring protocols of increasing structural decomposition. All protocols share a verification-oriented system prompt that enforces evidence-based assessment: the judge must (i)~distinguish between surface mention and concrete executable arrangements, (ii)~independently verify budget totals by summing itemized costs rather than accepting self-reported figures, and (iii)~cross-check transit times against distance plausibility. All outputs use structured JSON for deterministic parsing.

\paragraph{Direct (holistic).} The judge conducts a chain-of-thought analysis along five predefined aspects---hard-constraint satisfaction, information credibility, timeline feasibility, inter-stakeholder fairness, and actionability---then produces a single scalar score $R \in [1.0, 10.0]$. Explicit scoring anchors map qualitative assessments to numeric ranges (e.g., 9.0--10.0: all constraints satisfied with fully credible information; 3.0--4.9: hard-constraint violations or substantial fabrication). The judge must output per-aspect evidence prior to the final score, ensuring the holistic judgment is grounded in verifiable observations.

\paragraph{Rubric (dimension-decomposed).} The judge independently scores five predefined quality dimensions (Table~\ref{tab:rubric}), each on $[1.0, 10.0]$ with dimension-specific anchors. The final score is the deterministic weighted sum $R = \sum_{d=1}^5 \alpha_d \cdot s_d$, where $\alpha_d$ are fixed weights and $s_d$ are per-dimension scores. This protocol tests whether decomposing the evaluation into orthogonal quality axes reduces presentation-induced variance by constraining the judge's attention.

\begin{table}[t]
\centering
\small
\resizebox{\columnwidth}{!}{%
\begin{tabular}{@{}lcp{4.8cm}@{}}
\toprule
\textbf{Dimension} & \textbf{Weight $\alpha_d$} & \textbf{Evaluation focus} \\
\midrule
Constraint satisfaction & 0.30 & Per-user hard constraints verified item-by-item; budget independently summed \\
Information credibility & 0.25 & POI existence, price plausibility, transit time--distance consistency \\
Timeline feasibility & 0.20 & Schedule coherence, transit time accounting, activity density \\
Fairness & 0.15 & Dedicated time slots per user; compensation for compromised stakeholders \\
Conflict resolution & 0.10 & Specificity of resolution strategies; presence of creative win-win solutions \\
\bottomrule
\end{tabular}
}
\caption{Rubric dimensions and fixed aggregation weights. Each dimension is scored on $[1.0, 10.0]$ with explicit anchors (e.g., constraint satisfaction: 10 = all satisfied, 6 = one minor violation, 2 = three or more violations).}
\label{tab:rubric}
\end{table}

\paragraph{Checklist (binary verification).} The judge answers a set of binary verification questions organized into five weighted categories: (A)~per-user hard-constraint checks ($\omega = 2.5$ per item), (B)~information credibility ($\omega = 2.0$), (C)~timeline logic ($\omega = 1.5$), (D)~fairness ($\omega = 1.5$), and (E)~actionability ($\omega = 1.0$). Each item requires cited evidence from the plan. The final score is computed deterministically:
\begin{equation}
R = \frac{\sum_{b} a_b \cdot \omega_b}{\sum_{b} \omega_b} \times 9 + 1, \quad R \in [1, 10]
\end{equation}
where $a_b \in \{0, 1\}$ is the binary answer and $\omega_b$ the item weight. Crucially, Category~A is instantiated per-stakeholder (3 items $\times$ $n$ users), so checklist length grows linearly with $n$.

\paragraph{Decomposed (adaptive).} The judge independently evaluates each stakeholder $i$ through a structured pipeline: (a)~extracting hard constraints and soft preferences from the user profile, (b)~verifying each constraint against the plan with cited evidence, (c)~assessing credibility of POIs relevant to that user, and (d)~assigning a satisfaction score $\hat{u}_i \in [1.0, 10.0]$ with explicit anchors. The judge additionally generates adaptive importance weights $w_i$ based on constraint severity (safety-related constraints receive highest weight; vague preferences receive lowest), subject to $\sum_i w_i = 1$. The final score is $R = \sum_i w_i \hat{u}_i$. This protocol serves dual purposes: as a structured scoring method and as a diagnostic instrument for measuring weight instability (\S\ref{sec:exp_implicit_weights}).

\paragraph{Decomposed (uniform).} To isolate the effect of weight drift from satisfaction estimation noise, we additionally report a post-hoc recomputation using uniform weights $w_i = 1/n$ applied to the \emph{same} $\hat{u}_i$ values produced by the adaptive protocol. This baseline requires no additional judge calls---it simply replaces the adaptive weights with equal weights in the aggregation formula $R = \frac{1}{n}\sum_i \hat{u}_i$. The comparison between adaptive and uniform variants (Table~\ref{tab:structured}) directly quantifies the variance contribution of weight assignment.

\paragraph{Judge models and decoding.} We use three judge families: Qwen3.5-Plus \citep{qwen2026qwen35}, Gemini-3-Flash \citep{google2025gemini3flash}, and DeepSeek-V4-Flash \citep{deepseekai2026deepseekv4}. All use greedy decoding ($T{=}0$) and structured JSON output. Gemini at $T{=}0$ is fully deterministic (identical inputs produce identical scores), providing a controlled setting in which all observed score variance is attributable solely to input presentation differences rather than sampling stochasticity.

\subsection{Reward Consistency Metrics}
\label{app:rewrite_analysis}

For an experimental unit $s=(\text{base response}, n, \text{quality})$, variant family $t$, variant version $b$, and exact repeat $r$, let $\mathrm{score}(s,t,b)$ be the parsed scalar judge score.

\paragraph{Semantic-variant variance.}
\begin{equation}
    \sigma^2_{\text{Sem}}(s)
    = \mathrm{Var}_{t,b}\!\left[\mathrm{score}(s,t,b)\right].
\end{equation}
This measures presentation-induced reward inconsistency: $\sigma^2_{\text{Sem}}(s)=0$ means the judge gives identical scores to all semantics-preserving variants of the same response, while larger values mean the score is sensitive to wording, ordering, or evidence placement.

\paragraph{Exact-repeat variance.}
\begin{equation}
    \sigma^2_{\text{Rep}}(s)
    = \mathrm{Var}_{r}\!\left[\mathrm{score}(s,\mathrm{repeat},r)\right].
\end{equation}

\paragraph{Coefficient of variation and scaling.}
\begin{align*}
    \mathrm{CV}(s)
    &= \frac{\sqrt{\sigma^2_{\text{Sem}}(s)}}
            {\mathrm{Mean}_{\text{Sem}}(s)}, \\
    \mathrm{Growth}
    &= \frac{\sigma^2_{\text{Sem}}(n=8)}
            {\sigma^2_{\text{Sem}}(n=2)} .
\end{align*}
We also report $\sigma^2_{\text{Sem}}/\sigma^2_{\text{Rep}}$; values above one indicate that semantics-preserving presentation changes induce more score variance than exact repeated scoring.

\paragraph{Decomposed-diagnostic metrics.} For Decomposed (adaptive) outputs, we measure $\mathrm{CV}(w_i)$, $\mathrm{CV}(\hat{u}_i)$, and the weight-induced score shift $\Delta R_w = \sum_i (w_i - \bar{w}_i)\hat{u}_i$, where $\bar{w}_i$ is the mean adaptive weight within the same query--quality--$n$ semantic-variant group.

\begin{table}[ht]
\centering
\resizebox{\columnwidth}{!}{%
\begin{tabular}{@{}l cccc r@{}}
\toprule
Stakeholder & $w_{\text{actual}}$ & $\bar{w}$ & $\Delta w$ & $\hat{u}_i$ & $\Delta w \times \hat{u}_i$ \\
\midrule
User 1 & 0.100 & 0.126 & $-$0.027 & 8.0 & $-$0.212 \\
User 2 & 0.100 & 0.139 & $-$0.040 & 8.0 & $-$0.316 \\
User 3 & 0.300 & 0.228 & $+$0.072 & 1.0 & $+$0.072 \\
User 4 & 0.300 & 0.251 & $+$0.049 & 2.0 & $+$0.098 \\
User 5 & 0.200 & 0.255 & $-$0.055 & 6.0 & $-$0.330 \\
\midrule
\multicolumn{5}{@{}l}{Total shift} & $\mathbf{-0.688}$ \\
\bottomrule
\end{tabular}}
\caption{Weight-induced score shift in a high-dispersion variant ($n{=}5$, high quality). Adaptive weights move mass away from high-$\hat{u}_i$ stakeholders and toward low-$\hat{u}_i$ stakeholders, producing a $-0.69$ aggregation shift.}
\label{tab:multiplier_example}
\end{table}

\subsection{Variance Breakdown by Variant Type}
\label{app:variant_breakdown}

Table~\ref{tab:variant_breakdown} decomposes $\sigma^2_{\text{Sem}}$ by variant family (all three judges, direct scoring, $T{=}0$). Two patterns stand out. First, variants that alter stakeholder salience or trade-off framing tend to produce the largest high-$n$ variance: \emph{causal\_direction} (0.93 at $n{=}8$), \emph{underserved\_evidence\_placement} (0.90), \emph{stakeholder\_coverage\_order} (0.85), and \emph{tradeoff\_explanation\_order} (Growth${=}1.58$). Second, low-level controls such as \emph{low\_level\_formatting} (Growth${=}1.02$) and \emph{section\_header\_variant} (Growth${=}1.07$) show much weaker scaling with $n$. This suggests that the $n$-dependent variance growth documented in \S\ref{sec:exp_scaling} is driven less by generic formatting and more by presentation changes that affect the judge's implicit weighting of stakeholder interests and trade-offs.

\begin{table*}[ht]
\centering
\small
\begin{tabular}{@{}l cccc c@{}}
\toprule
\textbf{Variant family} & $n{=}2$ & $n{=}3$ & $n{=}5$ & $n{=}8$ & \textbf{Growth} \\
\midrule
\multicolumn{6}{@{}l}{\textit{Multi-stakeholder-specific}} \\
\quad stakeholder\_coverage\_order & --- & 0.76 & 0.71 & 0.85 & --- \\
\quad satisfaction\_summary\_order & --- & 0.73 & 0.79 & 0.55 & --- \\
\quad causal\_direction            & 0.75 & 0.89 & 1.18 & 0.93 & 1.24 \\
\quad shared\_benefit\_attribution & 0.49 & 0.56 & 0.77 & 0.54 & 1.11 \\
\quad underserved\_evidence\_placement & 0.76 & 0.60 & 1.34 & 0.90 & 1.18 \\
\midrule
\multicolumn{6}{@{}l}{\textit{General presentation controls}} \\
\quad tradeoff\_explanation\_order  & 0.39 & 0.54 & 0.95 & 0.62 & 1.58 \\
\quad generic\_paraphrase          & 0.68 & 0.69 & 0.82 & 0.84 & 1.23 \\
\quad format\_conversion           & 0.56 & 0.50 & 0.80 & 0.68 & 1.22 \\
\quad section\_header\_variant     & 0.55 & 0.69 & 0.77 & 0.58 & 1.07 \\
\quad low\_level\_formatting       & 0.51 & 0.54 & 0.87 & 0.52 & 1.02 \\
\quad topic\_position              & 0.75 & 1.19 & 0.86 & 0.64 & 0.86 \\
\midrule
exact\_repeat (baseline)           & 0.26 & 0.54 & 0.49 & 0.34 & 1.33 \\
\midrule
\textbf{ALL (semantic)}            & 0.64 & 0.68 & 0.96 & 0.86 & 1.34 \\
\bottomrule
\end{tabular}
\caption{$\sigma^2_{\text{Sem}}$ by variant family (all three judges, direct scoring, $T{=}0$). Growth = $\sigma^2(n{=}8)/\sigma^2(n{=}2)$. ``---'' indicates the variant is undefined for $n{=}2$ (requires ${\geq}3$ stakeholders). The exact\_repeat row estimates intrinsic judge noise $\sigma^2_{\text{Rep}}$; ALL aggregates the eleven semantic variant families.}
\label{tab:variant_breakdown}
\end{table*}

\section{Theory Details for Section~\ref{sec:theory}}
\label{app:theory_details}

\subsection{Proof of Theorem~\ref{thm:decomposition} (Decomposition)}
\label{app:proof_decomposition}

\begin{proof}
Substituting $\hat{w}_i = w_i^* + \eta_i$ and $\hat{u}_i = u_i^* + \delta_i$ into the holistic judge model $\hat{R}(y) = \sum_i \hat{w}_i \hat{u}_i + \epsilon$:
\begin{align*}
    \hat{R}(y) &= \sum_i (w_i^* + \eta_i)(u_i^* + \delta_i) + \epsilon \nonumber \\
    &= \underbrace{\textstyle\sum_i w_i^* u_i^*}_{R^*(y)} + \underbrace{\textstyle\sum_i w_i^* \delta_i}_{\text{(I)}} \nonumber \\
    &\quad + \underbrace{\textstyle\sum_i \eta_i u_i^*}_{\text{(II)}} + \underbrace{\textstyle\sum_i \eta_i \delta_i}_{\text{cross}} + \epsilon
\end{align*}
Since $R^*(y)$ is a constant for fixed $y$, all noise terms have zero mean ($E[\delta_i] = E[\eta_i] = E[\epsilon] = 0$), and the estimation noise, weight noise, and residual noise are mutually independent, the pairwise covariance terms among (I), (II), the interaction term, and $\epsilon$ vanish:
\begin{align}
    \mathrm{Var}[\hat{R}(y)] &= \underbrace{\mathrm{Var}\!\left[\textstyle\sum_i w_i^* \delta_i\right]}_{\text{Term I}} + \underbrace{\mathrm{Var}\!\left[\textstyle\sum_i \eta_i u_i^*\right]}_{\text{Term II}} \nonumber \\
    &\quad + \underbrace{\mathrm{Var}\!\left[\textstyle\sum_i \eta_i \delta_i\right]}_{C_{\text{cross}}} + \underbrace{\sigma_\epsilon^2}_{\text{Term III}}
\end{align}
Term~I $= \sum_i (w_i^*)^2 \sigma_{\delta_i}^2$ by mutual independence of $\delta_i$. Term~II is left in general form, as $\eta_i$ may be correlated through the normalization constraint $\sum_i \hat{w}_i = 1$ (expanded in Proposition~\ref{prop:scaling}). For $C_{\text{cross}}$, by the independence assumption $\boldsymbol{\delta} \perp \boldsymbol{\eta}$ (Theorem~\ref{thm:decomposition}), $\delta_i$ is independent of all $\eta_j$, and $E[\delta_i]{=}0$ eliminates cross-$i$ terms:
\begin{equation}
    \mathrm{Var}\!\left[\textstyle\sum_i \eta_i \delta_i\right]
    = \sum_i E[\eta_i^2]\,E[\delta_i^2]
    = \sum_i \sigma_{\eta_i}^2 \sigma_{\delta_i}^2.
\end{equation}
This yields Eq.~\ref{eq:decomposition}.

An equivalent derivation uses the Law of Total Variance, conditioning on the implicit weights $\boldsymbol{\eta} = (\eta_1, \ldots, \eta_n)$:
\begin{align}
    \mathrm{Var}(\hat{R}) &= \underbrace{E_{\boldsymbol{\eta}}[\mathrm{Var}(\hat{R} \mid \boldsymbol{\eta})]}_{\text{Terms I + III + cross}} \nonumber \\
    &\quad + \underbrace{\mathrm{Var}_{\boldsymbol{\eta}}(E[\hat{R} \mid \boldsymbol{\eta}])}_{\text{Term II}}
\end{align}
Here,
\begin{align}
    &E_{\boldsymbol{\eta}}[\mathrm{Var}(\hat{R} \mid \boldsymbol{\eta})] \nonumber \\
    &\quad = E_{\boldsymbol{\eta}}\!\left[\textstyle\sum_i (w_i^*+\eta_i)^2\sigma_{\delta_i}^2 + \sigma_\epsilon^2\right] \nonumber \\
    &\quad = \textstyle\sum_i (w_i^*)^2\sigma_{\delta_i}^2 + \sum_i \sigma_{\eta_i}^2\sigma_{\delta_i}^2 + \sigma_\epsilon^2 ,
\end{align}
recovering Term~I, the interaction term, and Term~III. The inner expectation $E[\hat{R} \mid \boldsymbol{\eta}] = \sum_i (w_i^* + \eta_i) u_i^*$ varies only through $\eta_i$, recovering Term~II as the variance of the aggregation step.
\end{proof}

\subsection{Proof of Proposition~\ref{prop:scaling} (Scaling)}
\label{app:proof_scaling}

\paragraph{Full Derivation of Proposition~\ref{prop:scaling}.}

The weight drift $\eta_i$ satisfies $\sum_i \eta_i = 0$ (normalization constraint: $\sum_i \hat{w}_i = \sum_i w_i^* = 1$). Proposition~\ref{prop:scaling} assumes exchangeable zero-sum drift for $n\ge2$: $\mathrm{Var}(\eta_i)=\sigma_\eta^2(n)$ for all $i$ and a common off-diagonal covariance $\mathrm{Cov}(\eta_i,\eta_j)=c_\eta$ for $i\neq j$. Since $\sum_i \eta_i = 0$ holds identically (not merely in expectation), the variance of this constant is zero. Expanding under exchangeability:
\begin{align}
    0 &= \mathrm{Var}\!\left[\textstyle\sum_i \eta_i\right]
      = \sum_i \mathrm{Var}[\eta_i] + \sum_{i\neq j}\mathrm{Cov}(\eta_i,\eta_j) \nonumber \\
      &= n\sigma_\eta^2(n) + n(n-1)c_\eta,
\end{align}
so $c_\eta=-\sigma_\eta^2(n)/(n-1)$. Expanding Term~II using the standard variance formula for a linear combination:
\begin{align*}
    \text{Term II} &= \mathrm{Var}\!\left[\textstyle\sum_i \eta_i u_i^*\right] \nonumber \\
    &= \sum_i \sum_j u_i^* u_j^* \,\mathrm{Cov}(\eta_i, \eta_j) \nonumber \\
    &= \sum_i (u_i^*)^2 \mathrm{Var}(\eta_i) \nonumber \\
    &\quad + \sum_{i\neq j} u_i^* u_j^*
       \,\mathrm{Cov}(\eta_i, \eta_j) \nonumber \\
    &= \sigma_\eta^2(n)\sum_i (u_i^*)^2 \nonumber \\
    &\quad - \frac{\sigma_\eta^2(n)}{n-1}
       \sum_{i\neq j}u_i^*u_j^*
\end{align*}
Using the identity $\sum_{i\neq j}u_i^*u_j^* = (\sum_i u_i^*)^2 - \sum_i(u_i^*)^2$ to eliminate the off-diagonal sum:
\begin{align*}
    &= \sigma_\eta^2(n)\sum_i (u_i^*)^2 \nonumber \\
    &\quad - \frac{\sigma_\eta^2(n)}{n-1}
       \!\left[(\textstyle\sum_i u_i^*)^2
       - \sum_i(u_i^*)^2\right] \nonumber \\
    &= \sigma_\eta^2(n)\!\left(1+\tfrac{1}{n-1}\right)
       \sum_i (u_i^*)^2 \nonumber \\
    &\quad - \frac{\sigma_\eta^2(n)}{n-1}
       (\textstyle\sum_i u_i^*)^2 \nonumber \\
    &= \frac{n\,\sigma_\eta^2(n)}{n-1} \nonumber \\
    &\quad \cdot
       \left[\sum_i (u_i^*)^2
       -\frac{(\sum_i u_i^*)^2}{n}\right]
\end{align*}
Recognizing the bracketed expression as the (unnormalized) variance $\sum_i(u_i^*-\bar{u})^2 = \sum_i(u_i^*)^2 - \frac{1}{n}(\sum_i u_i^*)^2$, and applying $\mathrm{Var}_i[u_i^*]\triangleq \frac{1}{n}\sum_i(u_i^*-\bar{u})^2$:
\begin{align}
    &= \frac{n\,\sigma_\eta^2(n)}{n-1}
       \sum_i(u_i^*-\bar{u})^2 \nonumber \\
    &= \frac{n^2}{n-1}\,\sigma_\eta^2(n)\,\mathrm{Var}_i[u_i^*].
\end{align}

Here $\bar{u}=\frac{1}{n}\sum_i u_i^*$. The formula exposes three factors: weight-drift variance $\sigma_\eta^2(n)$, utility dispersion $\mathrm{Var}_i[u_i^*]$, and the count factor $n^2/(n-1)$. Term~II vanishes if either weight drift or utility dispersion is zero; if either is small, the term is proportionally small. For fixed utility dispersion, the $n$-dependence is governed by $n\sigma_\eta^2(n)$, so the term grows only when per-stakeholder drift does not shrink fast enough and remains bounded when $\sigma_\eta^2(n)=O(1/n)$.

\paragraph{Remark on exchangeability.}
The exchangeable model is a symmetric specialization. Without exchangeability, let $\Sigma_\eta=\mathrm{Cov}(\boldsymbol{\eta})$; the zero-sum constraint implies $\Sigma_\eta\mathbf{1}=0$. Then
\begin{equation}
    \text{Term~II}
    = (u^*)^\top \Sigma_\eta u^*
    = (u^*-\bar{u}\mathbf{1})^\top
      \Sigma_\eta
      (u^*-\bar{u}\mathbf{1}).
\end{equation}
Thus the qualitative condition is unchanged: aggregation noise vanishes when weight drift is zero or when stakeholder utilities are constant, and otherwise depends on the alignment between utility dispersion and the covariance directions of weight drift. Proposition~\ref{prop:scaling} is the exchangeable special case, where $\Sigma_\eta=\frac{n}{n-1}\sigma_\eta^2(n)(I-\mathbf{1}\mathbf{1}^\top/n)$.

\paragraph{Relation to Term I.}

For any fixed target weights $w_i^*$ with $\sum_i w_i^* = 1$:
\begin{align}
    \text{Term I} &= \sum_{i=1}^n (w_i^*)^2 \sigma_{\delta_i}^2 .
\end{align}
Under uniform weights ($w_i^* = 1/n$) and homogeneous estimation noise ($\sigma_{\delta_i}^2 = \sigma_\delta^2(n)$), this simplifies to $\sigma_\delta^2(n)/n$. Thus Term~I reflects averaged utility-estimation noise, while Term~II reflects aggregation noise multiplied by satisfaction dispersion. This is why Term~II can create large local score shifts in high-dispersion groups even when satisfaction estimation explains most average variance.

\subsection{Irreducibility Analysis}
\label{app:irreducibility}

\paragraph{Why Term~II persists.} A holistic judge that outputs a scalar $\hat{R}$ for a multi-stakeholder response can be modeled as implicitly computing weights $\hat{w}_i$ (the marginal influence of each stakeholder's satisfaction on the final score). These weights encode a normative judgment---whose needs matter more---for which there is no objective target to infer from the response alone. In stochastic LLM judging, when no explicit constraint fixes $\hat{w}_i = w_i^*$, repeated evaluations can yield varying implicit weights. Under the symmetric noise model of Proposition~\ref{prop:scaling}, $\mathrm{Cov}(\boldsymbol{\eta})$ has rank $n{-}1$ (the full zero-sum subspace), so any non-constant satisfaction vector ($\mathrm{Var}_i[u_i^*] > 0$) has a nonzero projection onto the weight-noise subspace and yields a positive Term~II.

\paragraph{Rubric and Checklist.} These methods structure the \emph{estimation} task by decomposing it into dimensions (accuracy, feasibility, constraint satisfaction, etc.). However, these dimensions cut \emph{across} stakeholders rather than \emph{between} them. When scoring ``constraint satisfaction'' for a 4-person group, the judge still implicitly decides whose constraints matter more, pushing aggregation one level deeper. Formally, within each dimension $d$, the judge computes $\hat{R}_d = \sum_i \hat{w}_{i|d} \cdot \hat{u}_{i,d}$ with stochastic $\hat{w}_{i|d}$, preserving Term~II at the sub-dimension level.

\paragraph{Ensemble/Repeated Sampling.} Averaging $M$ independent judge calls reduces \emph{stochastic} variance by $1/M$: $\mathrm{Var}[\bar{R}_M] = \mathrm{Var}[\hat{R}]/M$. As $M \to \infty$, the stochastic component of Term~II vanishes, but $\bar{R}_M$ converges to the judge's expected aggregation rule, $E[\hat{R}] = \sum_i E[\hat{w}_i]u_i^*$, which need not equal the designer's intended $\sum_i w_i^*u_i^*$. The resulting bias cannot be removed by averaging.

\paragraph{Debate/Deliberation.} Multi-agent debate can surface reasoning that a single judge might miss, potentially reducing $\sigma_\delta^2$. However, unless the protocol explicitly separates per-role estimation from cross-role aggregation, the final arbiter still produces a holistic score through implicit stakeholder weights, so Term~II persists.

\subsection{Proof of Proposition~\ref{prop:snr_sign} (Advantage Sign Correctness)}
\label{app:snr_derivation}

\subsubsection{Part 1: Derivation of Eq.~\ref{eq:p_correct}}

\paragraph{Setup.} Consider a GRPO group of $G$ candidate responses with
target rewards $R_j^*$ and group mean
$\bar{R}^*=\frac{1}{G}\sum_{k=1}^{G}R_k^*$. The \textbf{true reward gap} for
candidate $j$ is
\begin{equation}
    \Delta_j^* = R_j^*-\bar{R}^* .
\end{equation}
The judge observes a noisy reward
\begin{equation}
    \hat{R}_j = R_j^*+\xi_j ,
\end{equation}
where the reward errors are independent Gaussian variables
$\xi_j\sim\mathcal{N}(0,\sigma_\xi^2)$. GRPO uses the group-relative observed
gap
\begin{align}
    \hat{\Delta}_j
    &= \hat{R}_j-\bar{\hat{R}} \nonumber \\
    &= (R_j^*-\bar{R}^*) + (\xi_j-\bar{\xi}) \nonumber \\
    &= \Delta_j^* + \zeta_j ,
\end{align}
where $\zeta_j=\xi_j-\bar{\xi}$ is the group-centered \textbf{random error}.
Thus $\hat{\Delta}_j$ is a noisy reward gap: the desired signal
$\Delta_j^*$ plus judge instability.

\paragraph{Noise after group centering.} Since
$\bar{\xi}=\frac{1}{G}\sum_{k=1}^{G}\xi_k$, the group-centered error expands
as
\begin{align}
    \zeta_j
    &= \xi_j-\bar{\xi} \nonumber \\
    &= \xi_j-\frac{1}{G}\sum_{k=1}^{G}\xi_k \nonumber \\
    &= \xi_j-\frac{1}{G}
       \left(\xi_j+\sum_{k\ne j}\xi_k\right) \nonumber \\
    &= \left(1-\frac{1}{G}\right)\xi_j
       -\frac{1}{G}\sum_{k\ne j}\xi_k .
\end{align}
For independent errors, all covariance terms vanish. Hence
\begin{align}
    \mathrm{Var}(\zeta_j)
    &= \mathrm{Var}\!\left(
       \left(1-\frac{1}{G}\right)\xi_j
       -\frac{1}{G}\sum_{k\ne j}\xi_k
       \right) \nonumber \\
    &= \left(1-\frac{1}{G}\right)^2\mathrm{Var}(\xi_j)
       +\frac{1}{G^2}\sum_{k\ne j}\mathrm{Var}(\xi_k) \nonumber \\
    &= (1-\tfrac{1}{G})^2\sigma_\xi^2
       +(G-1)\tfrac{1}{G^2}\sigma_\xi^2 \nonumber \\
    &= \sigma_\xi^2(1-1/G).
\end{align}
Because $\zeta_j$ is a linear combination of independent Gaussian errors,
\begin{equation}
    \zeta_j \sim
    \mathcal{N}\!\left(0,\sigma_\xi^2(1-1/G)\right).
\end{equation}

\paragraph{Sign correctness.} Assume first that $\Delta_j^*>0$; the case
$\Delta_j^*<0$ is symmetric. GRPO assigns the correct advantage sign when the
observed gap remains positive:
\begin{align}
    P(\mathrm{correct\ sign})
    &= P(\hat{\Delta}_j>0) \nonumber \\
    &= P(\Delta_j^*+\zeta_j>0) \nonumber \\
    &= P(\zeta_j>-\Delta_j^*) .
\end{align}
Standardize the centered noise as
\begin{equation}
    Z_j =
    \frac{\zeta_j}{\sigma_\xi\sqrt{1-1/G}}
    \sim \mathcal{N}(0,1).
\end{equation}
Then
\begin{align}
    P(\mathrm{correct\ sign})
    &= P\!\left(
       Z_j >
       -\frac{\Delta_j^*}{\sigma_\xi\sqrt{1-1/G}}
       \right) \nonumber \\
    &= \Phi\!\left(
       \frac{\Delta_j^*}{\sigma_\xi\sqrt{1-1/G}}
       \right).
\end{align}
For $\Delta_j^*<0$, the same expression holds with $|\Delta_j^*|$.

\paragraph{Evaluating at a typical response.} To obtain a single summary
statistic, evaluate a response whose true reward gap has magnitude one group
standard deviation:
\begin{equation}
    |\Delta_j^*|=\sqrt{\mathrm{Var}_j[\Delta_j^*]}.
\end{equation}
Using the SNR definition in Eq.~\ref{eq:snr_def},
$\mathrm{SNR}=\mathrm{Var}_j[\Delta_j^*]/\sigma_\xi^2$, gives
\begin{equation}
    P(\text{correct sign})
    =
    \Phi\!\left(\frac{\sqrt{\mathrm{SNR}}}{\sqrt{1-1/G}}\right).
    \label{eq:p_correct_full}
\end{equation}
This represents an optimistic estimate: the majority of responses in a group
lie closer to the mean ($|\Delta_j^*| < \mathrm{sd}$) and experience
correspondingly higher sign-flip rates.

\subsubsection{Part 2: Conditional SNR Scaling with Stakeholder Count}

This part analyzes the conditional limit where Term~II is the leading contribution to reward-error variance. This is not assumed to be the average empirical regime; it isolates how aggregation noise affects GRPO when satisfaction dispersion is high. In this limit, $\sigma_\xi^2 \approx \mathrm{Term~II}$. By Proposition~\ref{prop:scaling}:
\begin{equation}
    \sigma_\xi^2(n) \approx \frac{n^2}{n-1}\,\sigma_\eta^2(n)\,\mathrm{Var}_i[u_i^*]
\end{equation}
Substituting into the SNR definition:
\begin{align}
    \mathrm{SNR}(n)
    &= \frac{\mathrm{Var}_j[\Delta_j^*]}{\sigma_\xi^2(n)} \nonumber \\
    &\approx \frac{(n{-}1)\,\mathrm{Var}_j[\Delta_j^*]}{n^2\,\sigma_\eta^2(n)\,\mathrm{Var}_i[u_i^*]}
\end{align}
Thus SNR is controlled by the same factors in the denominator. In the high-drift, high-dispersion regime where $\sigma_\eta^2(n)=\Theta(1)$, $\mathrm{Var}_i[u_i^*]=\Theta(1)$, and the true reward-gap spread $\mathrm{Var}_j[\Delta_j^*]$ is bounded, this is $O(1/n)$; if either factor shrinks with $n$, degradation is weaker, and if either vanishes the Term~II channel disappears. Eq.~\ref{eq:p_correct_full} depends on $\sqrt{\mathrm{SNR}}$, so its argument is $O(1/\!\sqrt{n})$ in the high-drift, high-dispersion regime. Using $\Phi(x)=1/2+O(x)$ gives $P(\text{correct sign}) = 1/2 + O(1/\!\sqrt{n})$.

\subsubsection{\texorpdfstring{Numerical Thresholds ($G=8$)}{Numerical Thresholds (G=8)}}

With $1-1/G = 7/8$:
\begin{center}
\small
\begin{tabular}{@{}cc@{}}
\toprule
\textbf{SNR} & \textbf{$P(\text{correct sign})$} \\
\midrule
0.5 & 0.78 \\
1.0 & 0.86 \\
1.5 & 0.90 \\
2.0 & 0.93 \\
3.0 & 0.97 \\
4.0 & 0.98 \\
\bottomrule
\end{tabular}
\end{center}

Under ideal i.i.d.\ noise, even moderate SNR appears sufficient. However, this represents a best-case bound: in practice, (1)~noise is correlated across responses sharing the same prompt (reducing effective SNR), (2)~the judge's noise is not Gaussian but exhibits systematic biases (e.g., position bias, length bias), and (3)~the ``typical response'' assumption ($|\Delta_j| = \sigma_{\mathrm{signal}}$) is optimistic---many responses have smaller true differences. These factors explain why empirically observed training instability occurs at SNR values that the i.i.d.\ theory would predict as sufficient.

\section{End-to-End Training Details}
\label{app:training_details}

\paragraph{Base model and SFT.} We use Qwen3-30B-A3B-Thinking-2507 \citep{qwen2025qwen3} as the base model. Expert trajectories are generated by Qwen3.5-Plus on multi-stakeholder travel-planning queries and used for supervised fine-tuning before RL.

\paragraph{GRPO training.} All four reward configurations (\textsc{DecompR}, Direct, Rubric, Checklist) share the same GRPO training setup: 6{,}000 training queries, with the best checkpoint selected within 400 training steps. The training queries are distinct from the 60 seed responses used in the reward-consistency analysis (\S\ref{sec:empirical}) and from the 650 MR-TravelBench evaluation tasks.

\paragraph{Evaluation.} MR-TravelBench contains 650 held-out multi-stakeholder travel-planning tasks spanning varying stakeholder counts and constraint difficulties. Each policy is evaluated with 3 independent trials per task.

\end{document}